\newtheorem{theorem}{Theorem}
\newtheorem{definition}[theorem]{Definition}
\title{Evolutionary solving of the debts' clearing problem}
\author{Csaba Pătcaş \and Attila Bartha}
\date{}
\begin{document}
\maketitle

\begin{abstract}
The debts' clearing problem is about clearing all the debts in a group of $n$ entities (persons, companies etc.) using a minimal number of money transaction operations. The problem is known to be NP-hard in the strong sense. As for many intractable problems, techniques from the field of artificial intelligence are useful in finding solutions close to optimum for large inputs. An evolutionary algorithm for solving the debts' clearing problem is proposed.

Keywords: debt clearing, genetic algorithm
\end{abstract}

\section{Introduction}

The problem of debt clearing (DC problem) is one, that arises in real life situations as well. In a group of persons that know each other it is not uncommon to borrow some amount of money to an acquaintance for a period of time. This process is also happening among different banks, or even countries. As money transactions are time and money sensitive operations, it is desirable to clear the debts in a minimal number of money transaction operations.

The problem of settling debts was discussed by Verhoeff in 2004 (\cite{Ver04}).

Pătcaş \cite{Pat09} later re-discovered the problem and proposed it in 2008 at the qualification contest of the Romanian national team of informatics. The solution was described in \cite{Pat09} and the problem conjectured to be intractable, which was earlier proved in \cite{Ver04}. In \cite{Pat11} the problem in a dynamic setting is discussed and a new algorithm given, having superior speed on some cases compared to the one described in \cite{Pat09}.

\section{Stating the problem}

The problem statement is the following:

\emph{Let us consider a number of $n$ entities (persons, companies etc.), and a list of $m$ borrowings among these entities. A borrowing can be described by three parameters: the index of the borrower entity, the index of the lender entity and the amount of money that was lent. The task is to find a minimal list of money transactions that clears the debts formed among these $n$ entities as a result of the $m$ borrowings made.}

\begin{figure}[tb]
\begin{center}

List of borrowings:

\begin{tabular}{|c|c|c|}
\hline
Borrower&Lender&Amount of money\\
\hline
1&3&4\\
3&4&7\\
4&2&2\\
2&1&2\\
1&5&1\\
3&5&1\\
5&4&2\\
\hline
\end{tabular}

\medskip
Solution:

\begin{tabular}{|c|c|c|}
\hline
Sender&Reciever&Amount of money\\
\hline
1&4&3\\
3&4&4\\
\hline
\end{tabular}

\caption{Example for the DC problem}
\label{fig:ex}

\end{center}
\end{figure}

It is natural to model this problem using graph theory. Consider the following definitions.

\begin{definition}[\cite{Pat09}]
Let $G(V,A,W)$ be a directed, weighted multigraph without loops,
$|V|=n$, $|A|=m$, $W:A \rightarrow \mathbb{Z}$, where $V$ is the set
of vertices, $A$ is the set of arcs and $W$ is the weight function.
$G$ represents the borrowings made, so we will call it the
\textbf{borrowing graph}.
\end{definition}

The borrowing graph corresponding to the example in Figure \ref{fig:ex} is depicted in Figure
\ref{fig:borrowing}.

\begin{figure}[tb]
\begin{center}
\includegraphics[width=10cm]{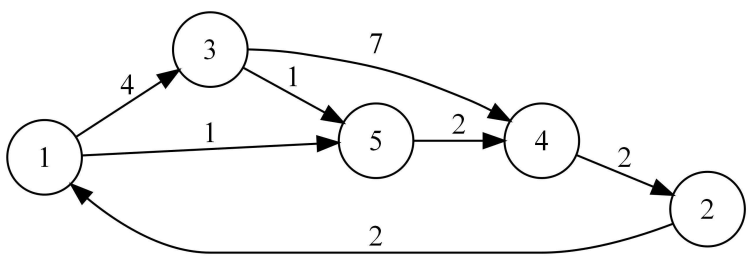}
\end{center}
\caption{The borrowing graph associated with the given example. An
arc from node $i$ to node $j$ with weight $w$ means, that entity $i$
must pay $w$ amount of money to entity $j$.}
\label{fig:borrowing}
\end{figure}


\begin{definition}[\cite{Pat09}]
Let us define for each vertex $v \in V$ the \textbf{absolute amount of debt} over the graph $G$: $D_G(v) = \sum
\limits_{\tiny\begin{array}{c}v' \in V\\ (v,v') \in A \end{array}}
W(v,v') - \sum \limits_{\tiny\begin{array}{c}v'' \in V\\ (v'',v) \in
A\end{array}} W(v'',v)$

Sometimes for simplicity we will refer to the absolute amount of debt of a node as \textbf{$D$ value}.
\end{definition}

The $D$ values corresponding to the example from Figure \ref{fig:ex} are listed in Figure \ref{fig:dvalues}.

\begin{figure}[tb]
\begin{center}
\begin{tabular}{|c|c|c|c|c|c|c|}
\hline
$i$&1&2&3&4&5\\
\hline
$D(i)$&3&0&4&-7&0\\
\hline
\end{tabular}

\caption{Absolute amounts of debt corresponding to the given example.}
\label{fig:dvalues}

\end{center}
\end{figure}

\begin{definition}[\cite{Pat09}]
Let $G'(V,A',W')$ be a directed, weighted multigraph without loops, with each arc $(i, j)$ representing a transaction of $W'(i, j)$ amount of money from entity $i$ to entity $j$. We call this graph a \textbf{transaction graph}. These transactions clear the debts formed by the borrowings modeled by graph $G(V,A,W)$ if and only if:

$D_G(v_i)=D_{G'}(v_i), \forall i=\overline{1,n}$, where $V=\{v_1,v_2, \ldots, v_n\}$

We will note this by: $G \sim G'$.
\end{definition}

See Figure \ref{fig:mintrans} for a transaction graph with minimal number of arcs corres\-ponding to the example from Figure \ref{fig:ex}.

\begin{figure}[tb]
\begin{center}
\includegraphics[width=10cm]{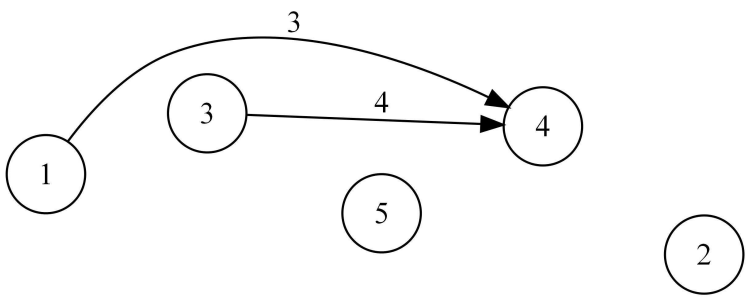}
\end{center}
\caption{The respective minimum transaction graph. An arc from node
$i$ to node $j$ with weight $w$ means, that entity $i$ pays $w$
amount of money to entity $j$.}
\label{fig:mintrans}
\end{figure}

We are now ready to reformulate the problem mathematically:

\emph{Given a borrowing graph $G(V,A,W)$ we are looking for a minimal tran\-sac\-tion graph $G_{min}(V,A_{min},W_{min})$, so that $G \sim G_{min}$ and $\forall G'(V,A',W') :$ $G\sim G', |A_{min}|\leq|A'|$ holds.}

\section{An equivalent problem}

\label{sec:equivalent}

The following observation is crucial in all of the solutions known so far.

\begin{theorem}
Any instance of the DC problem can be solved trivially by at most $n - 1$ transactions.
\label{the:trivial}
\end{theorem}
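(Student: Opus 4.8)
The plan is to construct an explicit transaction graph on the same vertex set that matches the $D$ values of the borrowing graph and uses at most $n-1$ arcs. The key observation is that $\sum_{v \in V} D_G(v) = 0$, since every borrowing contributes $+w$ to one vertex's $D$ value and $-w$ to another. So the vertices split into those with positive $D$ value (net debtors), those with negative $D$ value (net creditors), and those with zero (which can be ignored entirely — no transaction need touch them).

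My approach would be a greedy "matching" argument. First I would discard all vertices with $D_G(v)=0$; suppose $k \le n$ vertices remain. Then I would repeatedly do the following: pick any vertex $u$ with $D(u)>0$ and any vertex $w$ with $D(w)<0$, and add an arc $(u,w)$ of weight $\min(D(u),-D(w))$. This transaction zeroes out at least one of the two endpoints' residual $D$ values; remove any vertex whose residual value has become $0$. Each such step adds exactly one arc and removes at least one vertex from consideration, and because the residual values always sum to zero, as long as some vertex is nonzero there is both a positive one and a negative one available, so the process is well-defined until all residuals vanish.

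For the counting: starting from $k$ active vertices, each step removes at least one, and the very last step removes two simultaneously (when only one positive and one negative vertex remain, they must be negatives of each other, so a single arc kills both). Hence the number of arcs is at most $k-1 \le n-1$. I would then verify $G \sim G'$ by checking that the net flow added at each vertex over all steps telescopes exactly to its original $D_G(v)$, which is immediate from the construction since we only ever decrease $|D|$ and drive it to zero.

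The main obstacle — really the only subtlety — is making the termination and the "last step removes two" argument airtight: one must argue that the multiset of residual values stays balanced (sums to zero) throughout, and that one never gets "stuck" with, say, only positive values left. This follows because the invariant $\sum D(v) = 0$ is preserved by each transaction, so the count $k-1$ is genuinely achieved rather than merely $k$. A clean way to present this is by induction on $k$: the base case $k \in \{0,1\}$ is trivial (zero transactions, and $k=1$ is impossible since a single nonzero value cannot sum to zero), and the inductive step peels off one transaction as above, reducing to an instance with at most $k-1$ active vertices.
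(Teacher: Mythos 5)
Your proposal is correct and follows essentially the same argument as the paper: the same greedy pairing of a positive-$D$ vertex with a negative-$D$ vertex via an arc of weight $\min(D(i),-D(j))$, the same invariant that the residual $D$ values sum to zero, and the same observation that the final step zeroes out two vertices at once, giving at most $n-1$ arcs. Your explicit removal of zero-$D$ vertices up front is a minor presentational refinement, not a different method.
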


\begin{proof}
We give an algorithmic proof.

\begin{enumerate}
\item Let us choose two nodes $i$ and $j$, such that $D(i) > 0$ and $D(j) < 0$.
\item Add arc $(i, j)$ to the transaction graph having weight $\min(D(i), -D(j))$.
\item Update the $D$ values of $i$ and $j$ to reflect the addition of the arc (by decreasing $D(i)$ and increasing $D(j)$).
\item Repeat steps (1) - (3) as long as possible.
\end{enumerate}

It is clear that at least one $D$ value becomes zero as a result of executing steps (1) - (3). Also, because we have the invariant that the sum of all $D$ values is always zero, at the last iteration we always have $D(i) = -D(j)$. Thus two $D$ values become zero at the last iteration, which yields to the needed upper bound.
\end{proof}

We observe, that finding a minimal transaction graph is equivalent to partitioning $V$ into a maximal number of disjoint zero-sum subsets, more formally $V = P_1 \cup \ldots \cup P_{max}$, $\sum \limits_{u \in P_i} D(u) = 0, \forall i = \overline{1, max}$ and $P_i \cap P_j = \emptyset, \forall i, j = \overline{1, max}, i \neq j$. The reason for this is, that all the debts in a zero-sum subset $P_i$ can be cleared by $|P_i| - 1$ transactions by Theorem \ref{the:trivial}, thus to clear all the debts, $|V| - max$ transactions are necessary.

\section{Evolutionary technique for solving the DC problem}

We use the reformulation of the problem described in Section \ref{sec:equivalent}.

\paragraph{Representation}

A solution of the problem is represented by a permutation of the $D$ values of $V$, the set of nodes. Thus a candidate solution is a vector $C = (c_1, c_2, \ldots, c_{n})$, such that $c_i = D(u), \forall i \in \overline{1, n}$ for some unique $u \in V$.

For instance $C = (3, 0, -7, 4, 0)$ is a chromosome representing a candidate solution for the $D$ values from Figure \ref{fig:dvalues}.

The idea of permutation representations is used intensively in solutions of the Traveling Salesman Problem (\cite{GolLin85, OliSmiHol87, WhiStaFuq89}).

\paragraph{Fitness assignment}

To evaluate the fitness of a chromosome, we iterate over the genes of the chromosome in increasing order and maintain the partial sum obtained so far, that is $s_i = \sum \limits_{j = 1}^{i} c_j$. For every $s_i = 0$, we have found a new zero-sum subset of the partition (starting after the last encountered partial sum equal to zero and ending at $i$), so we can add one to the fitness of the chromosome.

For instance if we have $C = (-3, 2, 1, -5, 5)$, then $s = (-3, -1, 0, -5, 0)$, so the fitness of $C$ will be 2, corresponding to the partition formed by the first three elements and the last two elements.

\paragraph{Recombination}

Various operators for permutation representations are discussed in \cite{Dav85, GolLin85, Gor90, OliSmiHol87, Sys91, WhiStaFuq89}. We propose new recombination operators.

\subparagraph{Recomb1}

Let $C_1$ and $C_2$ be the two chromosomes, and $k \in [1, n]$ a random crossover point. Then, the first descendant $C_1'$  can be obtained by copying the first $k$ genes from $C_1$ and appending to it the elements of the permutation not used so far in the same order as they appear in $C_2$. The second descendant $C_2'$ is obtained symmetrically.

For instance,

\begin{center}
$k = 2$

$C_1 = (\mathbf{-3, 2}, 1, -5, 5)$ $C_2 = (\mathbf{-5, 2}, 1, -3, 5)$

$\downarrow$

$C_1' = (\mathbf{-3, 2}, -5, 1, 5)$ $C_2' = (\mathbf{-5, 2}, -3, 1, 5)$

\end{center}

\subparagraph{Recomb2}

The problem with \texttt{Recomb1} is, that the first descendant inherits most of its properties from $C_1$ and very little from $C_2$. Symmetrically $C_2'$ inherits most of its properties from $C_2$ and very little from $C_1$. This is undesirable, as both $C_1$ and $C_2$ can contain subsets from the optimal partition.

A better recombination operator may be the following. First, determine the partitions codified by $C_1$ and $C_2$, as described at the evaluation of the fitness function. Let those be $C_1 = P_{1, 1} \cup P_{1, 2} \cup \ldots$ and $C_2 = P_{2, 1} \cup P_{2, 2} \cup \ldots$. Initialize $C_1' := C_1$ and $C_2' := C_2$.

Then, iterate over every $P_{1, i}$. If some $P_{1, i}$ is contained in some $P_{2, j}$, that is $P_{1, i} \subset P_{2, j}$, replace $P_{2, j}$ in the second descendant with $P_{1, i} \cup (P_{2, j} \setminus P_{1, i})$. Repeat the same procedure for $C_2$ symmetrically.

For instance,

\begin{center}
$C_1 = (-3, 2, 1, -5, 5) = \{-3, 2, 1\} \cup \{-5, 5\}$

$C_2 = (2, 1, 5, -5, -3) = \{2, 1, 5, -5, -3\}$

$\downarrow$

$C_1' = \{-3, 2, 1\} \cup \{-5, 5\} = (-3, 2, 1, -5, 5)$

$C_2' = \{-3, 2, 1\} \cup \{5, -5\} = (-3, 2, 1, 5, -5)$

\end{center}

\paragraph{Mutation}

Two new mutation operators are proposed, having the property, that the fitness of the chromosome does not decrease.

\subparagraph{Mut1}

The inversion operator described by Holland (\cite{Hol75}) can be used without modification, on the sequence between the $i^{th}$ and $j^{th}$ elements.

For instance,

\begin{center}
$i = 2, j = 5$

$C = (-3, \mathbf{2, 1, -5, 5})$

$\downarrow$

$C' = (-3, \mathbf{5, -5, 1, 2})$

\end{center}

\subparagraph{Mut2}

\texttt{Mut1} can be used on the partition $C = P_1 \cup P_2 \cup \ldots$ instead of the permutation representation. This method guarantees that the fitness of the chromosome does not decrease.

For instance,

\begin{center}
$i = 1, j = 4$

$C = (-2, 2, 3, 4, -7, 1, -1, 6, -3, 2, -5) = \mathbf{\{-2, 2\} \cup \{3, 4, -7\} \cup \{1, -1\} \cup \{6, -3, 2, -5\}}$

$\downarrow$

$C' =  \mathbf{\{6, -3, 2, -5\} \cup \{1, -1\} \cup \{3, 4, -7\} \cup \{-2, 2\}} = (6, -3, 2, -5, 1, -1, 3, 4, -7, -2, 2)$

\end{center}

\subparagraph{Mut3}

\texttt{Mut1} can also be used inside some $P_k$ without decreasing the fitness.

For instance,

\begin{center}
$k = 4, i = 1, j = 4$

$C = (-2, 2, 3, 4, -7, 1, -1, 6, -3, 2, -5) = \{-2, 2\} \cup \{3, 4, -7\} \cup \{1, -1\} \cup \mathbf{\{6, -3, 2, -5\}}$

$\downarrow$

$C' = \{-2, 2\} \cup \{3, 4, -7\} \cup \{1, -1\} \cup \mathbf{\{-5, 2, -3, 6\}} = (-2, 2, 3, 4, -7, 1, -1, -5, 2, -3, 6)$
\end{center}

\section{How to obtain large instances of the DC problem}

Because of the strongly NP-hardness of the problem, it is challenging to generate large test cases for which information about the optimal solution is known. We describe five methods to generate large test cases.

\paragraph{Method 1}

If the optimal solution for some input is known, padding the set of $D$ values with $k$ zeros increases the optimal solution also by $k$.

\paragraph{Method 2}

Method 1 can be modified by padding the input with $k$ pairs of the structure $(x, -x)$.

\paragraph{Method 3}

If the number of negative (or positive) numbers is two, the problem is equivalent to the Subset Sum problem and is solvable in pseudopolynomial time by dynamic programming. Using this method we can generate inputs for which the optimal solution is unique, that is, there is a single subset of positive (negative) numbers having the sum equal to one of the two negative (positive) numbers (in absolute value). An optimal answer for such an input is expected to be difficult to find for our evolutionary approach, as in the worst case (when the cardinality of the subset is $n / 2$) only $2 \cdot (\frac{n}{2}!)^2$ out of the $n!$ possible permutations do represent an optimal solution. For $n = 10$, this means that the ratio of optimal solutions and all solutions is about $7.9 \cdot 10^{-3}$, while for $n = 100$ the ratio is about $1.9 \cdot 10^{-29}$.

This idea can be extended for any fixed number of negative (positive) numbers, but the running time of the dynamic programming solution raises quickly.

\paragraph{Method 4}

Let $n$ be the desired size of the input and $l \leq \lfloor n / 2 \rfloor$ an integer. First generate randomly a set of $n - l$ elements, containing only positive $D$ values and $l$ distinct integers from the $[1, n - l]$ range (denoted $r_1 < \ldots < r_l$). Let $s$ be the vector of partial sums, that is $s_i = \sum \limits_{j = 1}^i D(j), \forall i = \overline{1, n - l}$ (we assume $s_0 = 0$ and $r_0 = 0$). For every $r_i, \forall i = \overline{1, l}$ insert $- (s_{r_i} - s_{r_{i - 1}})$ to the set. In other words we insert with a negative sign the sum of $l$ partial sequences, whose borders are denoted by $r_{i - 1}$ and $r_{i}$. By this method we can get the optimal solution to be equal to $l$. As the range of the possible values of the first $n - l$ positive elements gets bigger, we expect the optimal solution to be harder and harder to find. The reason is, that the probability to get the same sum from a different combination of positive numbers gets smaller, thus the number of genetic representations corresponding to an optimal solution decreases.

\paragraph{Method 5}

It can be easily seen, that if the optimal solution for a set $V$ is known to be $max$, then the solution for $V \cup V$ will be $2 \cdot max$, the solution for $V \cup V \cup V$ will be $3 \cdot max$ and so on.

\section{Numerical experiments}

A preliminary testing phase was carried out using the same 15 test cases which were used when the problem was proposed in 2008 at the qualification contest of the Romanian national team (see \cite{Pat11}). These test cases all have specially crafted structures, with $n \leq 20$, $m \leq 100$ and the cost of an arc being a natural number no larger than 100. The optimal solution was found for each test case by using the algorithm described in \cite{Pat09}. Our genetic algorithm found the optimal solution for all the test cases.

\subsection{Combinations of operators}

In the first set of experiments our goal was to determine which combinations of our recombination and mutation operators work best in practice, along with desirable values for mutation probability. We constructed three test cases (\texttt{debt100a}, \texttt{debt100b} and \texttt{debt100c}) with different structures, all of them having $n = 100$.

\texttt{debt100a} was obtained by concatenating the test case from the initial 15 that was the most difficult to solve for the genetic algorithm (case 15) five times to itself. By the observation above in Method 5, the optimal solution for this test case is $max = 25$.

To generate \texttt{debt100b} we used Method 3 for $n = 50$ and concatenated the obtained set once to itself, thus obtaining a case having $max = 4$ by the observation above.

To obtain \texttt{debt100c} we first generated using a dynamic programming algorithm a set having 20 elements, which can be uniquely partitioned into three zero-sum subsets (and no more). Then we concatenated this set five times to itself, yielding $max = 15$ for this test case.

For each of the three described test cases we used the following methodology. For every possible combination of recombination and mutation operators we fixed the mutation probability to every value from 0 to 1 in steps of 0.1 and executed the genetic algorithm 10 times. We recorded the best solution obtained among the 10 executions, the average of the 10 best values and the average fitness of all genomes. In each case the population size was set to 100 individuals and the number of generations to 1000. The best five individuals always survived to the next generation.

To assess the efficacy of our algorithm we compared it to an algorithm called \texttt{RandomSearch}, which works by generating an independent random solution in every generation for every chromosome. In our case this meant generating 100000 random solutions and remembering the one with the maximum fitness value among them.

The results of the first set of experiments were the following:

\begin{itemize}

\item \texttt{debt100c} was the most difficult of the three test cases used, no algorithm being able to find the optimal solution $max = 15$. The best solution found by \texttt{RandomSearch} was 5, and the best solution found by the evolutionary algorithms was 13, using \texttt{Recomb2} along with \texttt{Mut1} with a mutation probability ranging from 0.8 to 1. The average fitness of all genomes was maximal at mutation probability 0.7.

\item \texttt{debt100b} was the easiest of the test cases, our genetic algorithm being able to find the optimal solution $max = 4$ in the majority of the cases (in about 76\% of the possible combinations of recombination and mutation operators and mutation probabilities). Mutation probability 0.7 along with \texttt{Recomb2} and \texttt{Mut1} maximized the average fitness again. The best solution found by \texttt{RandomSearch} was 3.

\item For \texttt{debt100a} \texttt{RandomSearch} was able to find a solution with fitness 9. Our genetic algorithm found the optimal solution 25 in a small percentage of the cases, using the same parameters that yielded the best solutions for \texttt{debt100c}. Maximal average fitness was obtained with mutation probability 0.4 using \texttt{Recomb2} and \texttt{Mut1}.

\end{itemize}

We can draw the conclusion, that our genetic algorithm is much more efficient than generating random solutions. The results suggest, that using \texttt{Recomb2} with \texttt{Mut1} works best in practice for a wide range of inputs. On the other hand we note, that \texttt{Recomb2} and \texttt{Mut2} is a particularly bad combination, the reason being, that it does not allow the exploration of a sufficient varied range of solutions, because neither of the operators is able to introduce new partition sets into the population. Still, \texttt{Mut2} works fairly well together with \texttt{Recomb1}, as the latter is capable of constructing new partition sets.

\subsection{Convergence to optimum}

In the second set of experiments we studied the convergence of the solution to the optimal value as the number of generations increases. We concatenated each of the three test cases described above ten times to itself, obtaining cases \texttt{debt1000a}, \texttt{debt1000b} and \texttt{debt1000c} respectively. We executed our genetic algorithm using \texttt{Recomb2} and \texttt{Mut1} with a mutation probability of 0.75. The population size was set to 80 and the best five individuals were always promoted to the next generation. The algorithm was executed once for 50000 generations, and the fitness of the best chromosome was recorded every 100 generations.

The results are depicted in Figures \ref{fig:debt1000a}, \ref{fig:debt1000b} and \ref{fig:debt1000c}. We can observe that in every case the fitness of the best individual raises sharply in the first 5000 generations, then slows down gradually. 50000 generations were enough to find a solution having fitness 244 (97.6\% of the optimum) for \texttt{debt1000a} and a solution having fitness 39 (97.5\% of the optimum) for \texttt{debt1000b}. Case \texttt{debt1000c} was significantly more difficult, the best solution having only fitness 122 (81.3\% of the optimum).

\begin{figure}[tb]
\begin{center}
\includegraphics[width=10cm]{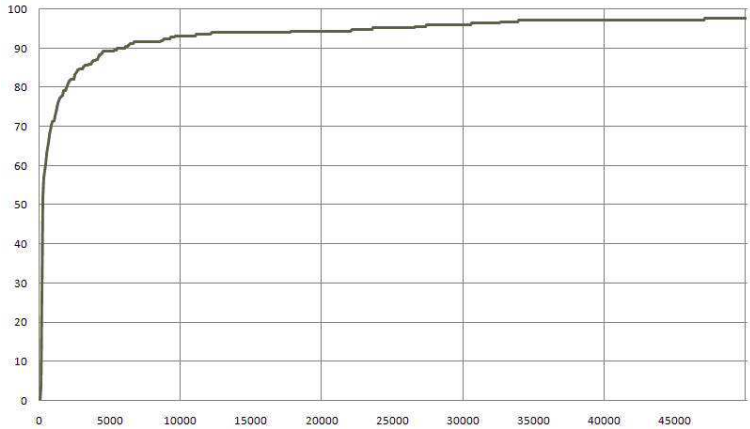}
\end{center}

\caption{The fitness of the best individual compared to the optimal solution in percentages for test case \texttt{debt1000a} as the number of generations increases.}
\label{fig:debt1000a}
\end{figure}

\begin{figure}[tb]
\begin{center}
\includegraphics[width=10cm]{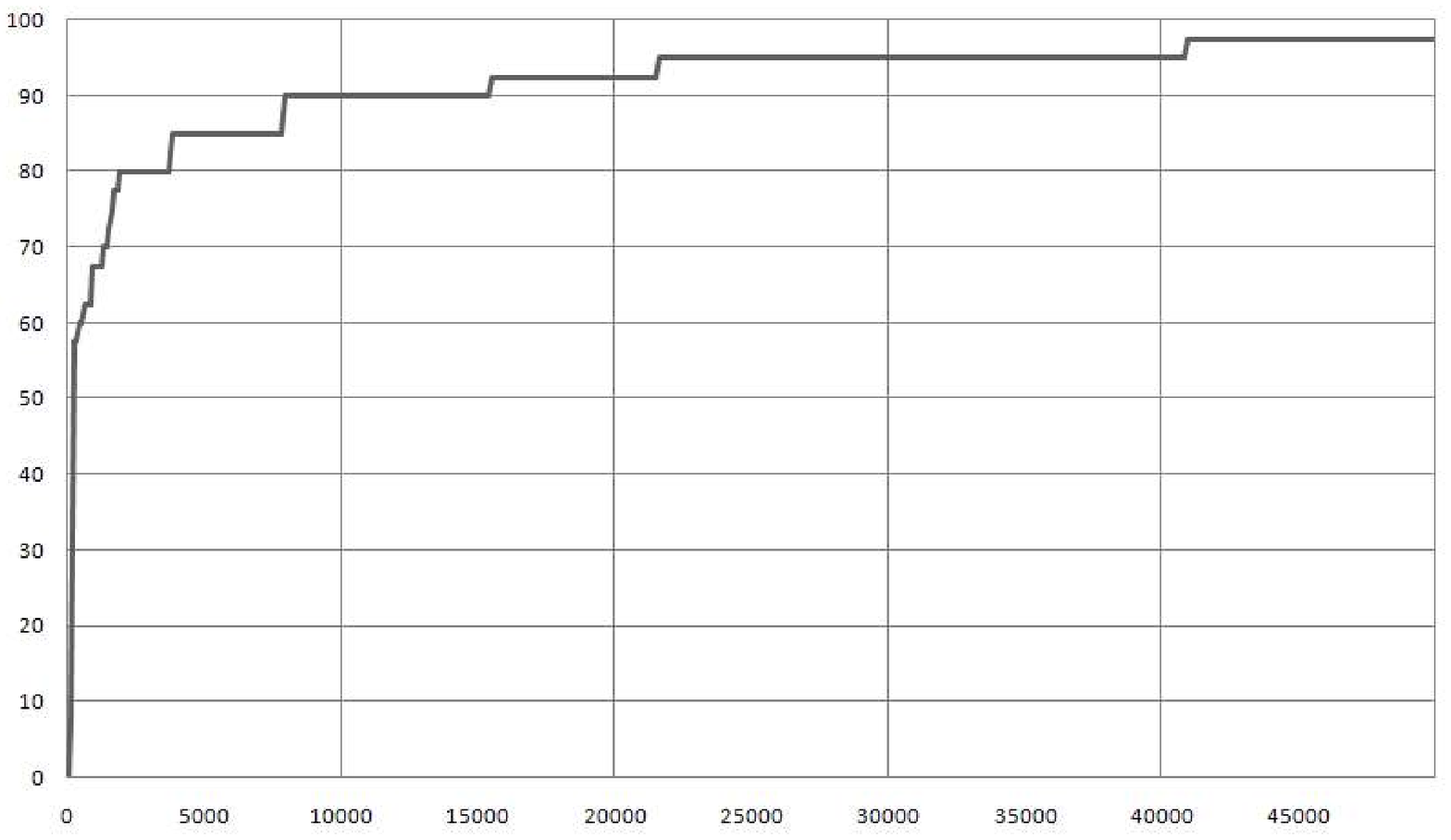}
\end{center}

\caption{The fitness of the best individual compared to the optimal solution in percentages for test case \texttt{debt1000b} as the number of generations increases.}
\label{fig:debt1000b}
\end{figure}

\begin{figure}[tb]
\begin{center}
\includegraphics[width=10cm]{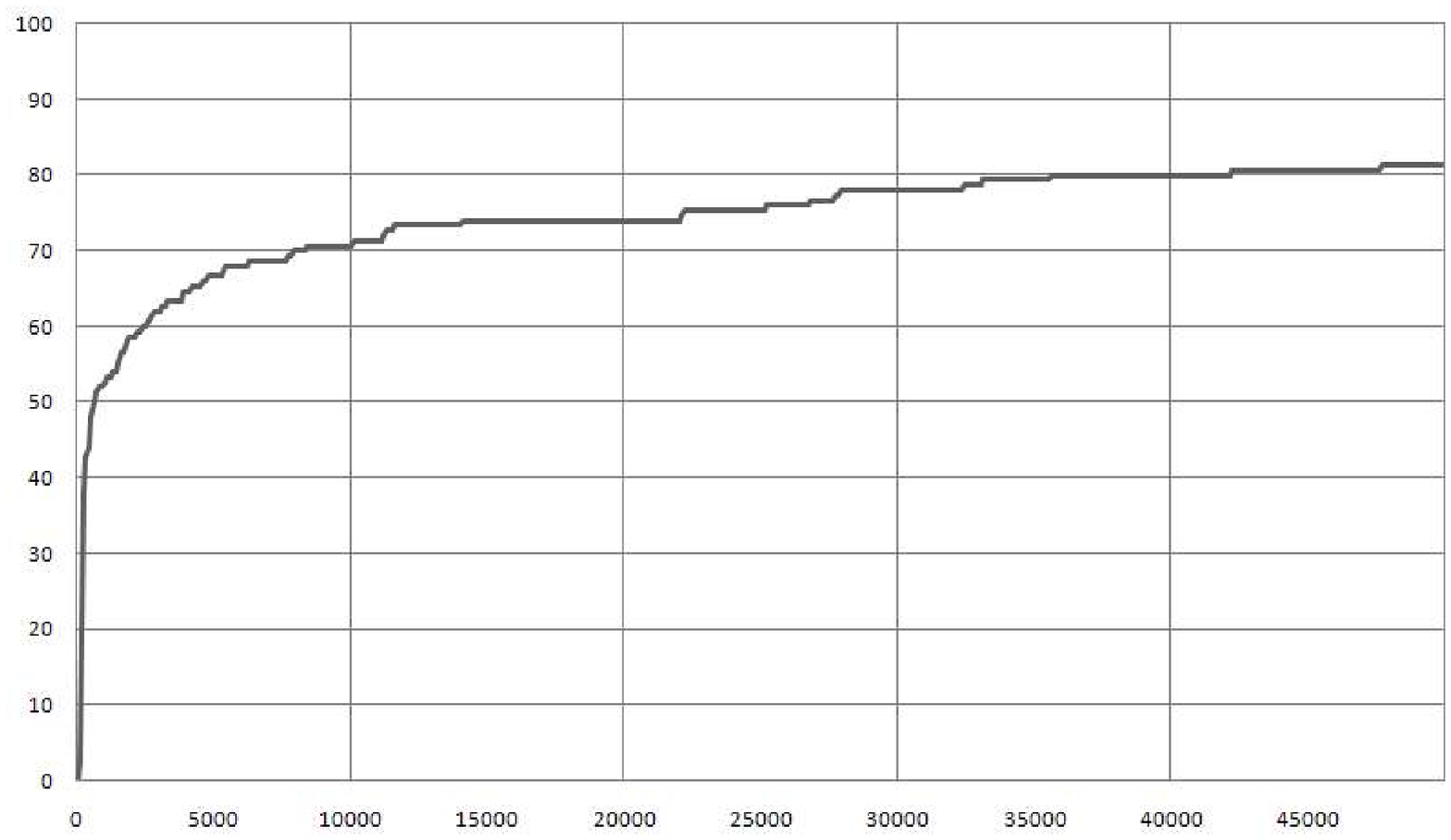}
\end{center}

\caption{The fitness of the best individual compared to the optimal solution in percentages for test case \texttt{debt1000c} as the number of generations increases.}
\label{fig:debt1000c}
\end{figure}

\subsection{Efficiency on very difficult test cases}

In the third set of experiments we used Method 2 to generate test cases, which are very difficult for our evolutionary algorithm. Starting with $n = 100$ and going by increments of 100 we generated sets having the structure $\{1, 2, \ldots, n / 2, \\-1, -2, \ldots, - n / 2\}$. It can be easily seen, that the optimal solution for these cases is $max = n / 2$ and it is unique. Only $\frac{n}{2}! \cdot 2^{n / 2}$ representations out of $n!$ translate to an optimal solution, which means that the ratio of optimal solutions to all solutions is about $1.0 \cdot 10^{-3}$ for $n = 10$ and about $3.6 \cdot 10^{-79}$ for $n = 100$.

For every case we executed the genetic algorithm 10 times using \texttt{Recomb2} and \texttt{Mut1} with a mutation probability 0.75. The population size was set to 80 and the best five individuals were always promoted to the next generation. The algorithm was stopped after 5000 generations. For every test case we recorded the best solution found by the algorithm, the average of the best solutions over the 10 executions and the summed up running time of the 10 executions. The results are presented in Figure \ref{fig:exp3}.

\begin{figure}[tb]
\begin{center}

\begin{tabular}{|c|c|c|c|}
\hline
N&Best solution&Average of bests&Running time\\
&(\% of optimum)&(\% of optimum)&(in seconds)\\
\hline
100&50 (100\%)&50 (100\%)&203\\
200&76 (76\%)&70.4 (70.4\%)&710\\
300&91 (60.6\%)&85.5 (57\%)&1247\\
400&108 (54\%)&100.5 (50.2\%)&1919\\
500&116 (46.4\%)&109.8 (43.9\%)&2610\\
600&130 (43.3\%)&121.1 (40.3\%)&3328\\
700&138 (39.4\%)&132 (37.7\%)&4225\\
800&147 (36.7\%)&142.4 (35.6\%)&5134\\
900&155 (34.4\%)&146.6 (32.5\%)&6084\\
1000&166 (33.2\%)&157.3 (31.4\%)&6766\\
\hline
\end{tabular}

\end{center}

\caption{Results of 10 executions for 5000 generations each, on very difficult test cases}
\label{fig:exp3}
\end{figure}

For $n = 100$ the optimal solution was found in all of the 10 executions, but as the size of the input increased, the best solution got further and further from the optimum. We note the robustness of the algorithm, as the best solution is usually just a few percentages away from the average.

\section{Conclusions}

The debts' clearing problem is an NP-hard problem of practical interest, as it arises in real life situations as well. The only known algorithms to solve the problem were the ones presented in \cite{Pat09} and \cite{Pat11}, which are exact algorithms that provide the optimal solution always, but their running time is practical only for small inputs ($n \leq 20$).

Using an equivalent problem we described an evolutionary algorithm to solve the problem and made extensive experiments to assess its efficacy. From the experiments we concluded, that our algorithm is much more efficient than a random search in the space of the solutions. Our algorithm is capable of finding the optimal solution for the most difficult test cases with sizes up to $n = 100$ in a matter of minutes. For cases as large as $n = 1000$ our approach remains practical, as it can obtain solutions in the range of 80\% - 98\% compared to the optimal solution in about an hour on a personal computer. In comparison a random search does not go above 15\% even for the easiest cases of this size.

\end{document}